\documentclass[a4paper]{llncs}

\usepackage{amsmath}
\usepackage{amsfonts}
\usepackage{eucal}
\usepackage{multicol}
\usepackage{amsmath,amssymb}
\usepackage{lipsum}
\usepackage{balance}

\usepackage{graphics} 
\usepackage{epsfig} 
\usepackage{mathptmx} 
\usepackage{algorithm,algorithmic}

\newcommand{\sysname}{SyROF}
\newcommand{\sysnamelong}{\emph{Synchronous Robotic Framework}}

\newcommand{\byte}{{\bf uint8\_t} ~}
\newcommand{\inte}{{\bf uint32\_t}~}
\newcommand{\void}{{\bf void}~}

\begin{document}
\title{\sysnamelong}


\author{Nagarathna Hema Balaji\ \and Jyothsna Kilaru \and Oscar Morales-Ponce}
\institute{Department of Computer Engineering and Computer Science, California State University Long Beach \\
\email{rathnahb01@gmail.com, Jyothsna.Kilaru@student.csulb.edu, oscar.morales-ponce@csulb.edu}}

\maketitle

\begin{abstract}
We present a synchronous robotic testbed called \sysname~that allows fast implementation of robotic swarms. Our main goal is to lower the entry barriers  to cooperative-robot systems for undergraduate and graduate students. The testbed provides a high-level programming environment that allows the implementation of Timed Input/Output Automata (TIOA). \sysname~offers the following unique characteristics: 1) a transparent mechanism to synchronize robot maneuvers, 2) a membership service with a failure detector, and 3) a transparent service to provide common knowledge in every round. These characteristics are fundamental to simplifying the implementation of robotic swarms. The software is organized in five layers: The lower layer consists of a real-time publish-subscribe system that allows efficient communication between tasks. The next layer is an implementation of a Kalman filter to estimate the position, orientation, and speed of the robot. The third layer consists of a synchronizer that synchronously executes the robot maneuvers, provides common knowledge to all the active participants, and handles failures. The fifth layer consists of the programming environment. 
\end{abstract}

\section{Introduction}

In natural disasters, the response time for the search-and-rescue team is a critical factor to minimize the casualties. A swarm of robots, e.g. drones can be used to concurrently sweep the region to minimize the response time without risk to the search-and-rescue team~\cite{rosenfeld2017intelligent}.  Some researchers have followed a centralized approach where the robots are supervised and operated from a central control~\cite{doroodgar2014learning}, \cite{chien2012scheduling}. However, these solutions are not scalable and are prone to failure. More robust solutions focus on fully autonomous systems, where the robots self-coordinate their actions without the help of preexisting infrastructure \cite{balta2017integrated}. However, the implementation of these systems requires an interdisciplinary team with expertise in robotics, distributed systems, wireless sensor networks, etc, making the systems accessible only for experienced research teams. In this paper we present a testbed, called \sysnamelong~(\sysname), that gives access to undergraduate computer science/engineer at  California State University Long Beach  to rapidly  implement and demonstrate robotic swarms. In essence, \sysname~provides the \emph{look-compute-move} model proposed by Suzuki and Yamashita in their seminar paper~\cite{suzuki1993formation}. In our framework, robots broadcast their state infinitely often. Then, every participant robot obtains the state of every other active robot using a membership service, which can be used to compute its next movement. In the \emph{look-compute-move} model, we can distinguish two main variants: 1) the fully synchronous $\mathcal{FSYNC}$, where all robots start the cycles at the same time \cite{flocchini2008arbitrary}, and  2) the asynchronous $\mathcal{ASYNC}$, where no assumption is made about the cycles. We choose to implement $\mathcal{FSYNC}$ to reduce the complexity of implementing cooperative algorithms. The power of the model has been shown in \cite{cohen2008local}, \cite{yamauchi2013pattern}, \cite{yamauchi2017plane}.  However,  \sysname~ can also be used to implement the asynchronous variant. We implement a  synchronizer to execute the synchronized cycle. In $\mathcal{FSYNC}$ every robot has common knowledge, i.e., every robot knows that every other robot knows, and so on, the state of the participant robots. Thus, the output of any deterministic function is identical in all robots. However, their output can be conflicting if robots do not attain common knowledge. We consider wireless networks where messages can be dropped at any time which makes it impossible to attain common knowledge deterministically. To overcome the impossibility, we consider the stream consensus protocol proposed by Morales-Ponce et al., \cite{morales2018stop} that guarantees common knowledge at almost every time with period of disagreement of bounded length.


In this paper we describe the design of \sysname~testbed that consists of multiple assorted mobile  robots including omnidirectional robots, drones and rovers. Each robot is equipped with a microprocessor, a gyroscope/accelerometer sensor, a flow sensor, a GPS like sensor and a  Bluetooth chip. We  design and implement a real-time publish-subscribe system as a base system. Then we design and implement a Kalman filter to compute the state of the robot that reduces the noise of the sensory data. Then we implement a synchronizer to synchronize the task of the robots. On top of the synchronizer, we implement a programing interface, called virtual machine, that allows implementing Timed Input/Output Automata. The main software architecture is shown in Figure~\ref{fig:arch}. Our design allows adding external sensors such as small computers for more complex applications. 

\begin{figure}[!h]
\centering
\includegraphics[scale=0.7]{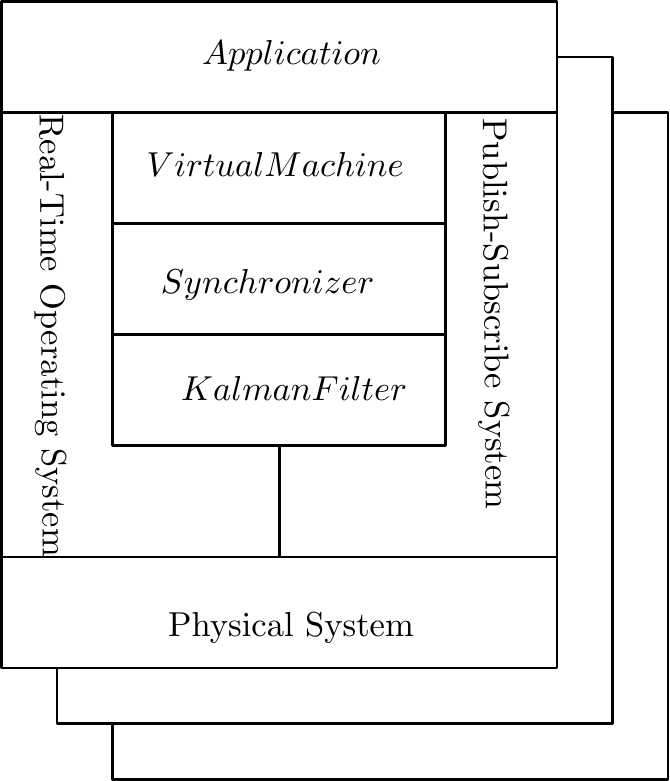}
\caption{Software architecture}
\label{fig:arch}
\end{figure}

The paper is organized as follows: in Section~\ref{sec:model}, we present the architecture of the robots. The formal requirements of the system are then presented in Section~\ref{sec:requirements}, followed by the related work in Section~\ref{sec:relatedwork}. We present the design for the real-time publish-subscribe system in Section~\ref{sec:pubsub}, then we present the design of the Kalman filter in Section~\ref{sec:kalman} and the membership service and synchronizer in Section~ \ref{sec:ync}. The programming interface  is presented in Section~\ref{sec:language}.  Finally, we conclude the paper in  Section~\ref{sec:conclusion}.

\section{Robot Architecture}\label{sec:model}
We design the main components of the system to be compatible with different robotic models. Currently, we have built the testbed with omnidirectional robots that are able to move in any direction without rotations. However, the components and software were designed to support different robot models, such as rovers that can rotate and move forward and backward, non-holonomic mobile robots, drones, etc. 

The main components of  \sysname~are implemented in a Cortex M4 with a real rime operating system. In particular, we use Teensy 3.2 and FreeRTOS for the omnidirectional robots that allows executing tasks  in bounded time.  \sysname~uses different sensors to compute the robot's state. Namely, an MPU-9265 (Inertial Measurement Unit or IMU) to obtain the robot's orientation,  a PMW390 (optical flow sensor) to compute the distance traversed by the robots, an NRF51822 (bluetooth module) to communicate and a DWM1000 (an ultra-wide band sensor) to compute the robot's absolute position. \sysname~also controls all the actuators. 
To simplify the connections, we design a printed circuit board (PCB) depicted in Figure~\ref{fig:pcb}.

\begin{figure}[!h]
\centering
\includegraphics[width=3cm, angle=90]{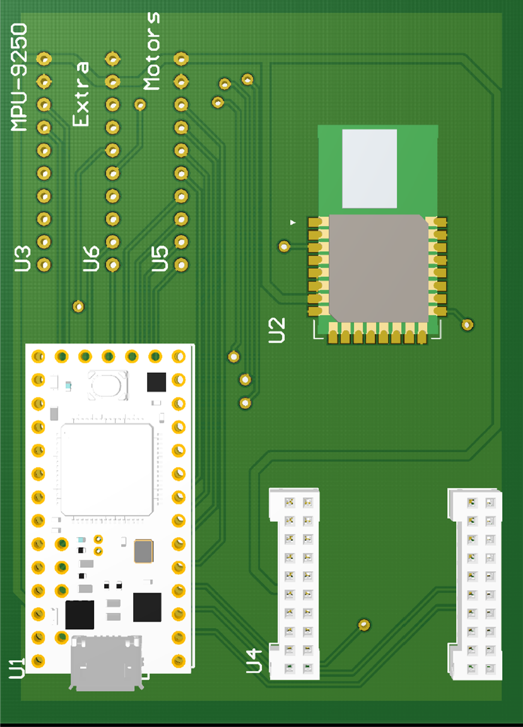}
\caption{Autopilot PCB }
\label{fig:pcb}
\end{figure}

Many robotic applications can be enhanced with the use of computer vision and machine learning algorithms. However,  these algorithms require more computational power. Hence, we design the system to be highly configurable to include different sensors and actuators including external small computers.
Throughout the paper we assume that robots are in communication range and that the clock drift of the process is bounded by a constant. 

\section{Requirements} \label{sec:requirements}
In this section, we list the main functional requirements of \sysname. 

\begin{itemize}
\item \emph{Provide a membership service}. In robotic swarms, every robot needs to be aware of every other robot that is participating in the system. This requirement adds dynamism to the system by allowing robots to join and leave at any time.  

\item \emph{Synchronize robotic primitives}. The time to complete a maneuver (or primitive) is non-deterministic due to inaccuracies in the actuators and sensors, battery levels, etc. Robots should synchronize primitives to provide the look-compute-move $\mathcal{FSYNC}$ model. 

\item \emph{Provide common knowledge infinitely often, even with communication faults}. Common knowledge is attained when every robot knows the state of all participant robots. If  all robots  communicate successfully, they must switch to cooperative mode and provide their state including position, speed, intentions, etc., to every other robot. If any robot does not receive the message of at least one robot, all robots must switch to autonomous mode. 

\item \emph{Provide a high level programming environment to implement cooperative robotic systems using Timed Input/Output Automata (TIOA)}. The programming environment should provide  instructions to control the actuators, read sensory data, provide complex queries. 
\end{itemize}

\section{Related Work}\label{sec:relatedwork}

Testbed for lowering the entrance barrier to multi-robot systems have been proposed previously, see for example  \cite{pickem2015gritsbot},  \cite{rubenstein2012kilobot}, where the the authors present a testbed, called GRITSBot and Kilobots, respectively, using inexpensive robots. 
In GRITSBot,  a central server is responsible for executing the code in each robot. Moreover, it requires a precise tracking system to determine the position
of each robot. Thus, user can design and test robotic synchronous system. However, any failure in the server affects the whole system. Unlike GRITSBot, in \sysname~all the robots have the same role and, therefore, there is no single point of failure. 
In the Kilobot testbed robots asynchronously  runs its own code. However, those robots do not have enough computational power to estimate their positions.  Unlike Kilobots, our proposed testbed, robots can estimate their own position and synchronize the primitives.

Another closely related framework is ROS (Robot Operating System)~\cite{quigleyros}.  A system built using ROS consists of several processes connected in a peer-to-peer topology.  ROS has been very successful and many packages for a great variety of applications are available including SLAM (Simultaneously Localization and Mapping). However, ROS does not provide support to synchronize robotic primitives nor common knowledge guarantees. Our framework provides a new high-level platform-independent programming language that provides users with synchronous  executions of robotic primitives and common knowledge in realtime. Moreover, \sysname~ can be integrated with ROS to extend its functionalities without compromising the realtime execution.
 
Our synchronization mechanism is based on consensus among all robots. 
Many papers have based their systems on consensus  where processes need to agree on specific tasks~\cite{davis2016consensus, wei2015new, varadharajan2018over}. 
Another approach is to use of tuple spaces. Processes insert tuples in a common space that can be read by every process~\cite{pinciroli2016tuple}, but, there is no guarantee that a process is able to read.

\section{Real-Time Publish-Subscribe System}\label{sec:pubsub}
In this section we present the design of the real-time publish-subscribe system that simplifies the implementation of robotic systems. The publish-subscribe system can be logically separated into three parts- the Broker, Publishers, and Subscribers as shown in Figure~\ref{fig:pubsub}. Each process can be either publisher or subscriber or both. 

\begin{itemize}
\item \emph{The Broker.}
The main functionality of the broker is to disseminate data from the publisher to subscribers that are interested in the topic. Thus, it plays a key role for the expected functionality of the system.  Publishers register through the Broker using unique identifiers and subscribers subscribe to the publisher with the Broker using the identifiers. It is assumed that identifiers are unique.  

\begin{figure}[!h]
\centering
\includegraphics[width=8cm]{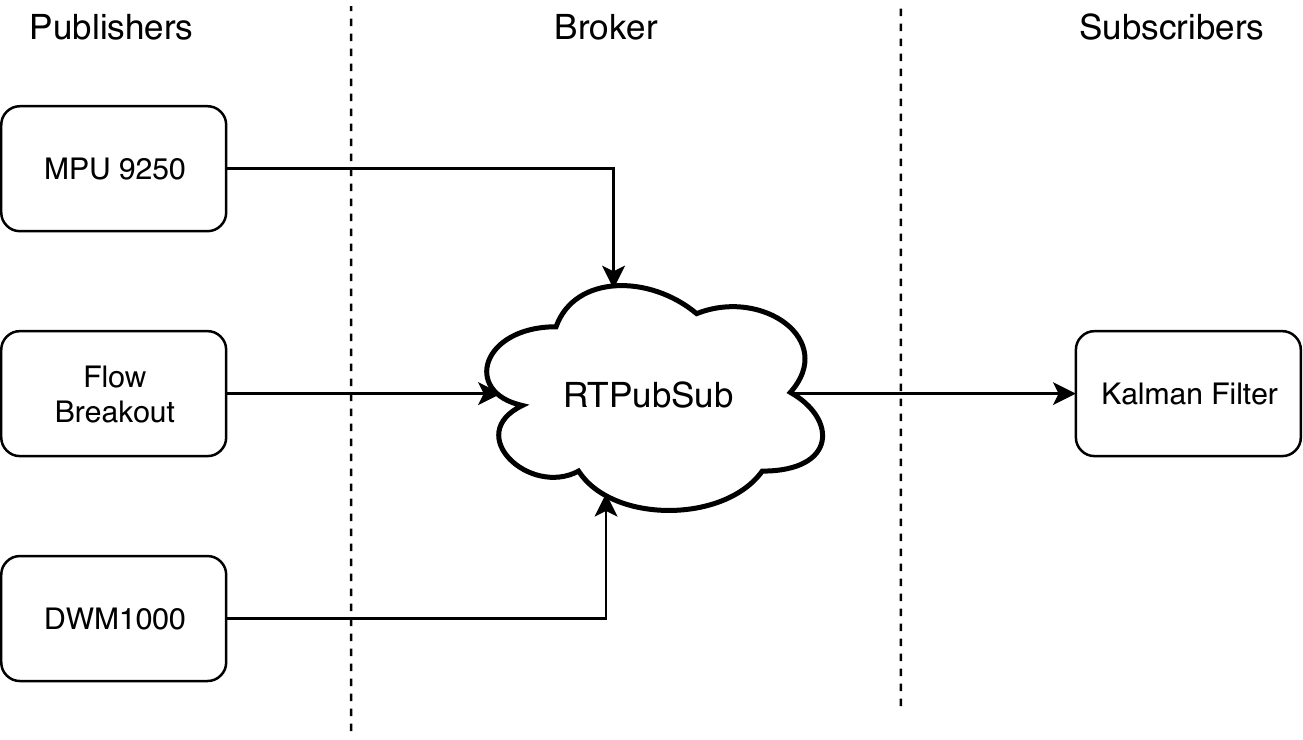}
\caption{Publish Subscribe System}
\label{fig:pubsub}
\end{figure}




\item \emph{Publishers.} They are dedicated to produce data without knowing  which processes consume it.  For example, publishers can read data from sensors, execute an algorithm to filter data, etc. When a publisher is created, it registers as publisher  using a unique identifier.  When data ready, it notifies the broker. 
 


\item \emph{Subscribers.} They are dedicated to consume data. When a subscriber is created, it subscribes  to  the topic that it is interested in using their unique identifier. It can subscribe to more than one topic. When a publisher publishes the data, the broker notifies all the subscribers that have declared interest in it. 
Subscribers can also play the role of publisher. 

\end{itemize}


In this paper we report the implementation in FreeRTOS.  However, the design can be also implemented in other real-time operating systems. FreeRTOS is based on co-routines to emulate a multiprocessor system. Essentially, a co-routine can have multiple entry points and maintains its state between activations. In FreeRTOS, each process executes a task that does not  return during the execution of the program.  To switch to other tasks, an operating system method must be invoked such as \emph{vTaskDelayUntil}, \emph{vTaskDelay}, etc. Since we are interested in providing a system for time-critical applications, each task must complete a step of computation in bounded time.

\subsection{Broker Interface}



The real-time publish-subscribe system interface is presented in Interface~\ref{alg:signature} which consists of three methods
\begin{itemize}
\item registerPublisher.  Register a publisher into the real-time publish-subscribe system using a unique id. First, the publisher task instantiates a FreeRTOS queue using the operating system method $xQueueCreate$ and passes the handler to the system as well as the size of the tuple. The Broker inserts these values into a list of publishers. The function returns the handler (index) of the publisher which is  later used to publish new data.





\item subscribe. Register a subscriber to a given publisher. Before the subscriber calls $subscribe$, it creates a task and a queue that stores the publisher produced tuple. In FreeRTOS, $xCreateTask$  and $xQueueCreate$  are used to create the task and the queue, respectively. These values are passed as parameters to the function $subscribe$ and the Broker inserts these two values into a publisher queue.

\item publish. Publishers use the  function  $publish$ to publish new data. To avoid blocking, the broker notifies the main broker task. When the broker receives the notification, it reads the data from the publisher using the queue handler and then copy it to all the subscribers using their queue handlers.

\end{itemize}



\floatname{algorithm}{Interface}
\begin{algorithm}[!h]
\begin{algorithmic}[1]
\caption{Publish-Subscribe Interface}
\label{alg:signature}
\STATE \byte registerPublisher(\byte id,  \byte size, {\bf QueueHandle\_t} queueHandler)%
\STATE  {\bf bool} subscribe(\byte id, \inte taskHandler, {\bf QueueHandle\_t} queueHandler) 
\STATE  \byte publish(\byte index) 

\end{algorithmic}
\end{algorithm}

\section{Kalman Filter}\label{sec:kalman}
In this section we present the implementation of the Kalman filter~\cite{julier1997new}. The Kalman filter estimates the state of the system in real-time from noisy sensory information, assuming that the noise follows a Gaussian distribution. The Kalman filter is critical for the correct functionality of our testbed.
Sensors suffer from errors occurring due to many reasons. For instance, GPS sensors might suffer deviation while the device does not have a direct sight-of-view to the satellite. Furthermore, odometric values must be accurate to determine the actual position of the vehicle. 

\subsection{Kalman Filter Design}
Let $\Delta$ be the rate at which the Kalman filter runs and $X(t) = (x(t), y(t), \dot{x}(t), \dot{y}(t), \theta(t))$ be the robot's state at time $t$ where $x(t)$ and $y(t)$ are the absolute position, $\dot{x}(t)$ and $\dot{y}(t)$ are the speed and 
$\theta(t)$ the orientation of the robot. Since the current state only depends on the previous  state,  we omit $t$. Let $P$ be the covariance matrix of $X$ at time $t$. $P$ represents the uncertainties in the current state. Let $u_a$ be the motor thrust  and $u_\theta$ be the steering angle. Based on the dynamic model, we can compute the new state as follows:

\[
\begin{array}{ll}
\hat{x} = x + \Delta \frac{(\dot{x})^2}{2} \\
\hat{y} = y + \Delta \frac{(\dot{y})^2}{2} \\
\hat{\dot{x}} = \dot{x} + \Delta  \frac{\mu_a^2}{2} \cos(\Delta  \frac{\mu_\theta^2}{2}) \\
\hat{\dot{y}} = \dot{y} + \Delta  \frac{\mu_a^2}{2} \sin(\Delta  \frac{\mu_\theta^2}{2}) \\
\hat{\theta} = \theta + \Delta \frac{\mu_\theta^2}{2}  \\
\end{array} 
\]

It can be simple written as  $\hat{X} = f(X) + f(U)$.
Let $J$ be the Jacobian matrix of $f(X)$ and $Q$ be the covariance matrix representing 
the process noise. The estimation of covariance matrix at time $t+1$ is defined as  
$\hat{P} = F P F^T + Q$.

Let $Z = (gyro_z, flow_x, pos_x, pos_y)$ be the sensor readings at time $t$. 
Let $R$ be the covariance matrix measuring the noise. 
We use the following equations to deduce $h(\hat{X})$. 
\[
\begin{array}{lcl}
flow_x &=& \hat{\dot{x}} \\
flow_y &=& \hat{\dot{y}} \\
pos_x &=&  \hat{x} \\
pos_y &=& \hat{y} \\ 
gyro_z &=& \hat{\dot{\theta}} \\
\end{array} 
\]

Let $F$ be  the system state matrix. We can summarize the Kalman filter
as:

Model forecast
\begin{equation} \label{eqn:model}
\begin{array}{lcl}
\hat{X} &=& f(X) + f(U)  \\
\hat{P}   &=& F P F^T + Q  
\end{array} 
\end{equation}

Step correction
\begin{equation} \label{eqn:correction}
\begin{array}{lcl}
K &=& P F^T (F P F^T + R)^{-1} \\
X &=& \hat{X}^T + K(Z - h(\hat{X}))  \\
P &=& (I - K J_h) \hat{P}
\end{array} 
\end{equation}

\floatname{algorithm}{TIOA}
\begin{algorithm}
\begin{multicols}{2}
\begin{algorithmic}[1]
\caption{Synchronization protocol}
\label{alg:actions2}
\STATE { Signature: }
\STATE \hspace{0.1cm} {\bf ext }  $SendP  \langle${Id, mbrInP, OM, $Progress$} $\rangle$ 
\STATE \hspace{0.1cm} {\bf ext }  $SendW\langle$ {Id, mbrInW, OM, $Wait$} $\rangle$ 
\STATE \hspace{0.1cm} {\bf ext }  $SendV\langle$ {Id, mbrInV, OM, $Vote$} $\rangle$ 
\STATE \hspace{0.1cm} {\bf ext }  $Recieve\langle$ {Id, information,OM, state} $\rangle$

\STATE \hspace{0.1cm} {\bf ext }  $FailureDetector$ 
\STATE \hspace{0.1cm} {\bf ext }  $GoToAutonomous$
\STATE \hspace{0.1cm} {\bf ext }  $GoToCooperative$

\STATE {\bf State: }
\STATE \hspace{0.1cm} {\bf int } $Progress$
\STATE \hspace{0.1cm} {\bf int } $Wait$
\STATE \hspace{0.1cm} {\bf int } $Vote$

\STATE \hspace{0.1cm} {\bf analog} $now \in \mathcal{R}$ {\bf initially} $0$
\STATE \hspace{0.1cm} {\bf analog} $nextSend \in \mathcal{R}$ {\bf initially} $0$

\STATE \hspace{0.1cm} {\bf discrete} $infromation$, is the the set of member which can be list of nodes in Progress, Wait or Vote
\STATE \hspace{0.1cm} {\bf discrete} $OM$, Operation Mode {\bf initially} $\emptyset$
 
\STATE \hspace{0.1cm} {\bf discrete} $state \in \{Progress, Waite,  Vote \}$, state of the nodes  
 
\STATE \hspace{0.1cm} {\bf discrete} $memberInP$, is a list of the member in Progress state
\STATE \hspace{0.1cm} {\bf discrete} $memberInW$, is a list of the member in Wait state
\STATE \hspace{0.1cm} {\bf discrete} $memberInV$, is a list of the member in Vote state

\STATE \hspace{0.1cm} {\bf discrete} $messageLost$, is a list to keep track of message lost for each process

\STATE {\bf Actions: }
\STATE {\bf external} $SendP\langle${Id, mbrInP, OM, $Progress$} $\rangle$ 
\STATE \hspace{0.1cm} {\bf precondition}:
\STATE \hspace{0.2cm} $state = Progress$ $\wedge$ $nextSend \gets now + \alpha$
\STATE \hspace{0.1cm} {\bf effect}:
\STATE \hspace{0.2cm} $nextSend \gets now + \alpha$

\STATE {\bf external}  $SendW \langle$ {Id, mbrInW, OM, $Wait$} $\rangle$ 
\STATE \hspace{0.1cm} {\bf precondition}:
\STATE \hspace{0.2cm} $state = Wait$$\wedge$ $nextSend \gets now + \alpha$
\STATE \hspace{0.1cm} {\bf effect}:
\STATE \hspace{0.2cm} $nextSend \gets now + \alpha$

\STATE {\bf external}  $SendV \langle$ {Id, mbrInV, OM, $Vote$} $\rangle$ 
\STATE \hspace{0.1cm} {\bf precondition}:
\STATE \hspace{0.2cm} $state = Vote$$\wedge$ $nextSend \gets now + \alpha$
\STATE \hspace{0.1cm} {\bf effect}:
\STATE \hspace{0.2cm} $nextSend \gets now + \alpha$

\STATE {\bf external} $receive\langle$ {Id, information,OM, state} $\rangle$ 
\STATE \hspace{0.1cm} {\bf precondition}:
\STATE \hspace{0.2cm} $full(Chanel) = True$
\STATE \hspace{0.1cm} {\bf effect}:

\STATE \hspace{0.2cm} {\bf if} $state = PROGRESS $
\STATE \label{ln:collected} \hspace{0.3cm} $ mbrInP \gets mbrInP   \cup \{id\}$ 

\STATE \hspace{0.2cm} {\bf if} $state = Wait $
\STATE \label{ln:collected} \hspace{0.3cm} $ mbrInW \gets mbrInW   \cup \{id\}$  

\STATE \hspace{0.2cm} {\bf if} $state = Vote $
\STATE \label{ln:collected} \hspace{0.3cm} $ mbrInV \gets mbrInV  \cup \{id\}$ 

\STATE {\bf external} $FailureDetector$ 
\STATE \hspace{0.1cm} {\bf precondition}:
\STATE \hspace{0.2cm} $full(Chanel) = False$
\STATE \hspace{0.1cm} {\bf effect}:

\STATE \label{ln:collected} \hspace{0.2cm} $ messageLost_Id \gets  messageLost_Id   +  1$ 
 
\STATE {\bf external} $GoToAutonomous $ 
\STATE \hspace{0.1cm} {\bf precondition}:
\STATE \hspace{0.2cm}  $messageLost >  \mathcal {K}  $  
\STATE \hspace{0.1cm} {\bf effect}:
\STATE \hspace{0.2cm}  $OM\gets 1$   

\STATE {\bf external} $GoToCooperative$ 
\STATE \hspace{0.1cm} {\bf precondition}:
\STATE \hspace{0.2cm} $OM =1 \vee  receive$ $ OM  =  0$ $from$ $all$ $other$ $Nodes$
\STATE \hspace{0.1cm} {\bf effect}:
 
\STATE \hspace{0.3cm} $OM \gets 0$

\STATE {\bf internal} $Progress$
\STATE \hspace{0.1cm} {\bf precondition}:
\STATE \hspace{0.2cm} $ actionCompleted  = True $
\STATE \hspace{0.1cm} {\bf effect}:
\STATE \label{ln:collected} \hspace{0.2cm} $ state \gets Wait  $ 
\STATE \label{ln:collected} \hspace{0.2cm} $ mbrInW  \gets \{Id\}  $ 

\STATE {\bf internal} $Wait$
\STATE \hspace{0.1cm} {\bf precondition}:
\STATE \hspace{0.2cm} $ mbrInP = mbrInW \vee |mbrInV| > 0$
\STATE \hspace{0.1cm} {\bf effect}:
\STATE \label{ln:collected} \hspace{0.2cm} $ state \gets Vote  $ 
\STATE \label{ln:collected} \hspace{0.2cm} $ mbrInV  \gets mbrInV \cup \{Id\}  $ 

\STATE {\bf internal} $Vote$
\STATE \hspace{0.1cm} {\bf precondition}:
\STATE \hspace{0.2cm} $ mbrInW = mbrInV$
\STATE \hspace{0.1cm} {\bf effect}:
\STATE \label{ln:collected} \hspace{0.2cm} $ state \gets Progress  $ 
\STATE \label{ln:collected} \hspace{0.2cm} $ mbrInV  \gets \emptyset$ 
\STATE \label{ln:collected} \hspace{0.2cm} $ mbrInP  \gets \{id\}$ 
 
\end{algorithmic}
\end{multicols}
\end{algorithm}

\subsection{Implementation}

We implement the described Kalman filter using the real-time publish-subscribe system. Thus, we create dedicated tasks for each sensor.  Namely, \emph{IMU} that consists of reading the gyroscope/accelerometer data, 
\emph{optical flow} that reads the offset between two consecutive images and \emph{ultra-wideband} that reads the ranging between the sensor and the fix stations. We register these tasks as publishers and publish the data at the updated refresh 
time. The core of the Kalman filter is implemented using two tasks: 

\begin{enumerate}
\item The \emph{state} task. The \emph{state} subscribes to \emph{IMU}, \emph{optical flow} and \emph{ultra-wideband}. When a publisher publishes new data, the state updates $Z$  and sets the uncertainties of the sensor in $R$, accordingly.

\begin{enumerate}
\item MPU 9250: The MPU returns the linear acceleration as a unitary vector. To convert to  $m/s^2$, we  multiply by $G=9.81 m/s^2$.  The MPU also returns the angular speed in degree per seconds. Therefore, we simple convert to radians. 

\item Flow Breakout (Optical flow): It returns the difference of pixels in $x$ and $y$ from two consecutive images. 
\begin{figure}[ht!]
\centering
\includegraphics[scale=0.6]{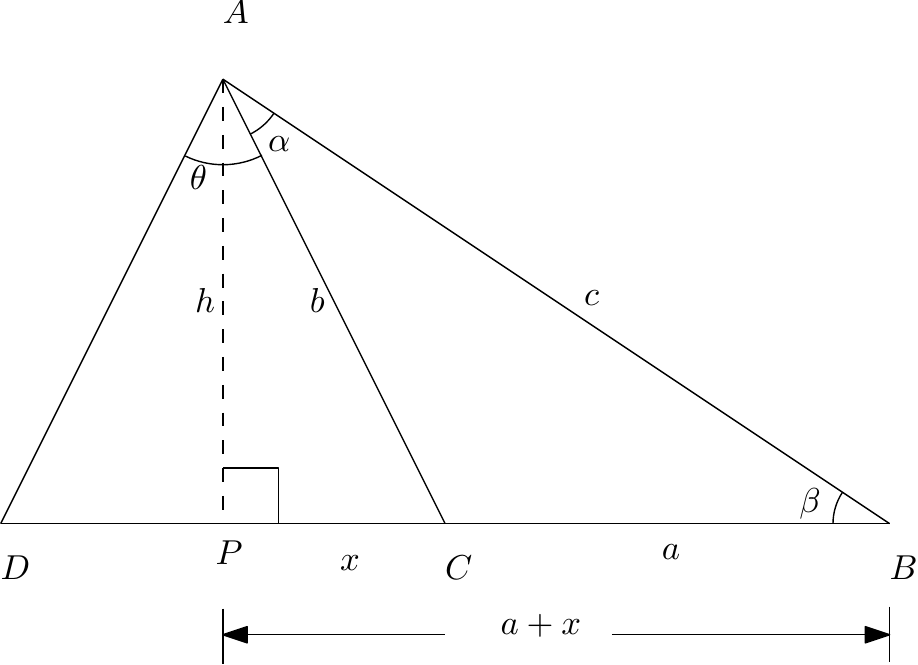}
\caption{Flow Breakout data with orientation $\alpha$.}
\label{fig:FlowBreakout}
\end{figure}
As depicted in the Figure~\ref{fig:FlowBreakout}, the Flow Breakout sensor measures the lateral velocity of the sensor in both directions, that is $\delta_x$ and $\delta_y$. The effective viewing angle of the sensor $\theta$ is assumed to be equal to 25$^{\circ}$. Since there will be a slight orientation in the sensor's axis we are adding an angle $\alpha$ to be the angle of orientation. Let $h$ be the height of the sensor from the ground. Observe that $\beta = \dfrac{\pi - \theta}{2} - \alpha $ and $a+x$ can be calculated using $\beta$ as  we already know the height of the sensor. That is, $a+x = h \cot (\beta)$. The Flow Breakout sensor measures the lateral velocity in pixels per second. The number of pixels in the $x$ and $y$ axis of the image frame is $30$ each. Therefore,  the
normalized speed is $x =\frac{h \cot (\beta) - a}{30}$.

\item DWM1000 (Ultra-wide band): The sensor returns the distance to a fixed number of anchors. An anchor is a station at a well known position. We compute the position using simple trigonometry.

We consider four anchors with a layout depicted in Figure~\ref{fig:counter} where anchor $a_0, a_1, a_2, a_3$ are located at  $(0,0,0)$, $(1,1,0)$, $(0,1,1)$ and $(1,0,1)$, respectively. Let $p = (x, y, z)$ be the robot position and let $[p_1, p_2, p_3]$ denote the  plane determined by the three points $p_1, p_2, p_3$. 

Consider any two different anchors $a_i,a_j$, the line segment $(a_i, a_j)$ and its  perpendicular line segment $(q,p)$ such that $q$ is on $(a_i, a_j)$. Observe that the perpendicular line has an angle of $45^o$ and $dist(a_i, a_j) = \sqrt{2}$. Let $dist(a_i, q) = b$, $dist(q, a_j) = \sqrt{2} -b$, $dist(q, p) = c$, $dist(a_i, p) = d_i$ and $dist(a_j, p) = d_j$ where $d_i$ and $d_j$ are returned by DWM1000. From the Pythagorean theorem, $b^2 + c^2 = d_i^2$ and $(\sqrt{2}-b)^2 + c^2 = d_j^2$. Therefore, $b = \frac{d_i^2-d_j^2 + 2}{2\sqrt{2}}.$

\begin{figure}[h!]
\centering
\includegraphics[scale=0.5]{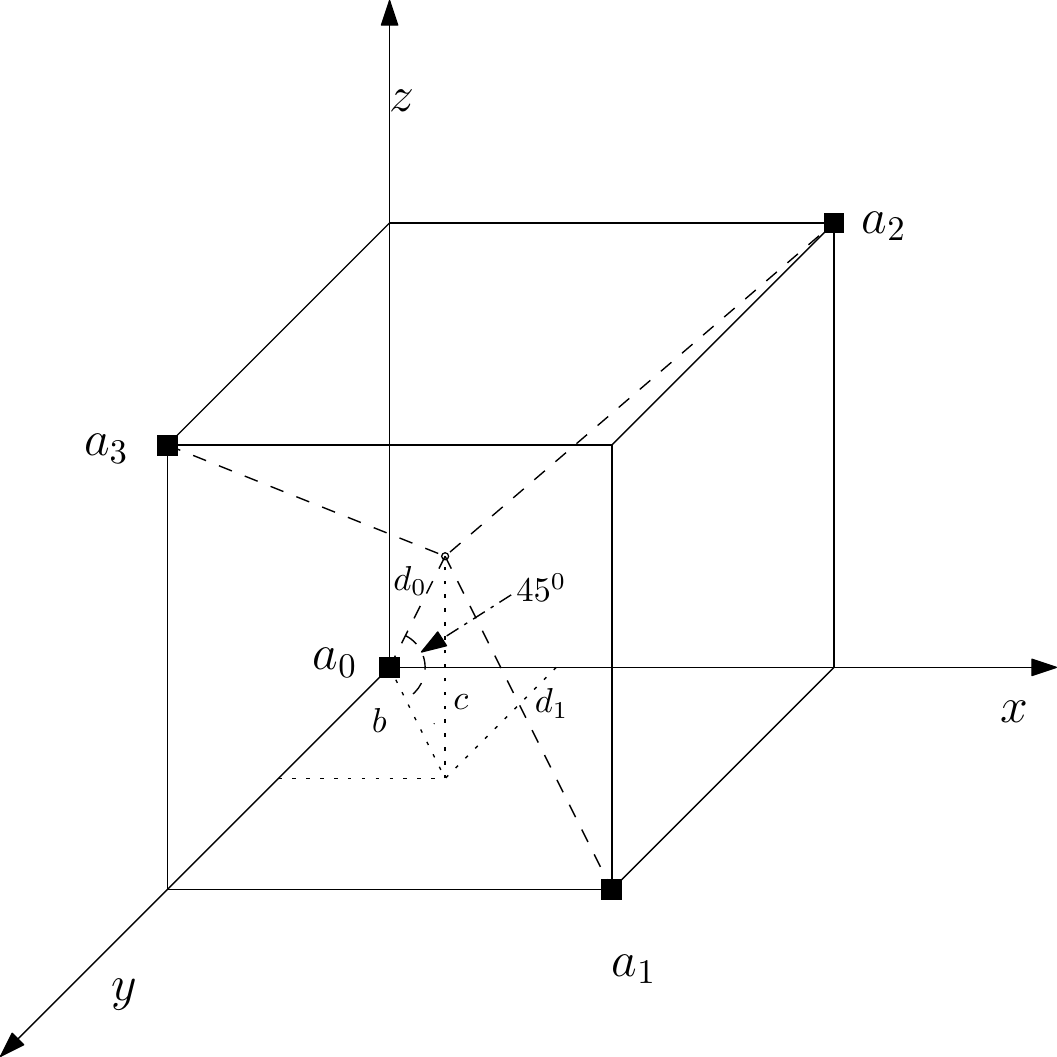}
\\ ~\\
\includegraphics[scale=0.6]{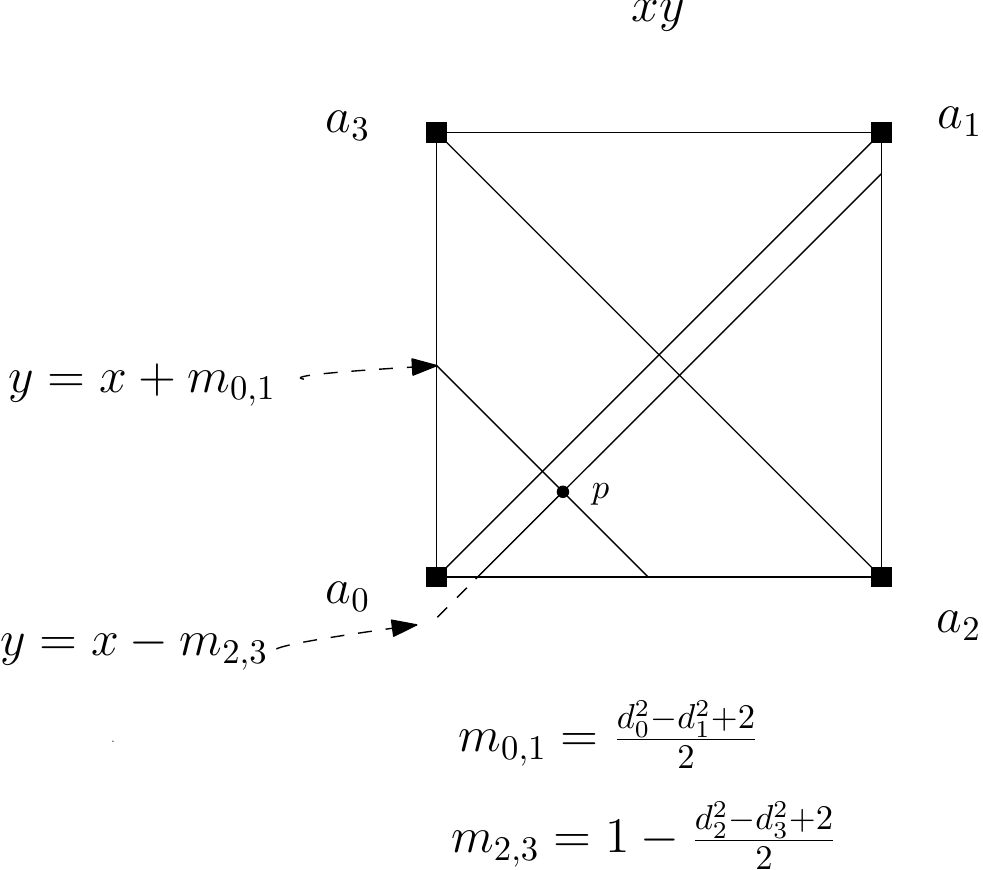}

\caption{Upper: Layout, Lower: Plane $xy$.}
\label{fig:counter}
\end{figure}

Observe that $q = (x, y)$ is the intersection of the perpendicular lines of $(a_0,a_1)$, $(a_2,a_3)$ in the plane $XY$, $q = (x, z)$ is the intersection of the perpendicular lines of $(a_0,a_2)$, $(a_1,a_3)$ in the plane $XZ$ and  $q = (y, z)$ is the intersection of the 
perpendicular lines of $(a_0,a_3)$, $(a_1,a_2)$ in the plane $YZ$.

Let $y = x - m_{0,1}$ be the equation of the perpendicular line $(a_0,a_1)$ passing through $p$. Since the angle is $45^o$, $m_{0,1} = \frac{d_0^2-d_1^2 + 2}{2}$; see Figure~\ref{fig:counter}.  Similarly, let $y = -x + m_{2,3}$ be the equation of the perpendicular $(a_2,a_3)$ passing through $p$. Therefore, $m_{2,3} = 1- \frac{d_2^2-d_3^2 + 2}{2}$. Similarly we can compute for the other perpendicular; see Figure~\ref{fig:counter}.


We can now compute the position using the following equations:
\[
\begin{array}{rcl}
x &=& \frac{m_{0,1}+m_{2,3}}{2} = \frac{m_{0,2} + m_{1,3}}{2} \\
y &=& \frac{m_{0,1}-m_{2,3}}{2} = \frac{m_{0,3} + m_{1,2}}{2} \\
z &=& \frac{m_{0,2}-m_{1,3}}{2} = \frac{m_{0,3} + m_{1,2}}{2} 
\end{array}
\]

\end{enumerate}

\item  The Kalman task executes at a fixed rate. First it updates the state using  equation~\ref{eqn:model}~and~\ref{eqn:correction} and then  it sets  high uncertainties in $R$ for the new calculations. Recall that in the Kalman filter the weight of each sensor information depends on the uncertainties. Once a new sensory reading is completed the uncertainties matrix $R$ is adjusted. Observe that if one step of the Kalman filter is executing without reading the sensory data, the state will be determined by the model. This approach allows the Kalman filter to handle sensors reading at different rates and reduces the maintenance time.
\end{enumerate}

We show a sketch of the code that receives the data to populate the matrices $Z$ and $R$.
\begin{scriptsize}
\begin{verbatim}
void stateTask(void* args)
  while (true)
    if (xTaskNotifyWait(0xffffffff, 0xffffffff, &publisherId, 
             portMAX_DELAY) == pdTRUE) 
    switch (publisherId)
     case MPU_DATA:
       xQueueReceive(mpuQueue, &gyroAcc, 
          (TickType_t )0);   // gyroAcc contains the data
                                         of the IMU
    case FLOW_DATA:
      xQueueReceive(flowQueue, &flowData, 
           (TickType_t )0);   // flowData contains the data 
                                         of the flowdeck  
    case DWM_DATA:
      xQueueReceive(dwmQueue, &dwmData, 
            (TickType_t )0);  // dwmData contains the data
                                         of the DWM100 
\end{verbatim}
\end{scriptsize}

The task of the Kalman Filter is executed at the frequency of $FREQUENCY\_KF$. It  executes one step of the Kalman filter, update the variables for the next step and publish the new state. After publishing the data, the matrices $Z$ and $R$ are reset to high uncertainties. 
\begin{scriptsize}
\begin{verbatim}
void kalmanTask(void* args)
     while (true)
       vTaskDelayUntil(&lastWakeTime, 
                     FREQUENCY_KF / portTICK_RATE_MS);    
       if  (ekf_step(&ekf))  // Executes the Kalman Filter
           xQueueOverwrite(stateQueue, &state);
                                  // insert in the queue
           publish(stateId);  // publish
       // reset the Matrices Z and R	
\end{verbatim}
\end{scriptsize}

\section{Membership and Synchronizer Service}\label{sec:ync}
In this section, we describe the protocol which allows the synchronization of Timed Input/Output Automata. We present the Algorithm as a Timed Input/Output Automata in  Protocol~\ref{alg:actions2}.  Let $R= \{r_1,r_2,..., r_n\}$ denote the set of robots. The protocol relies on three states: $PROGRESS$ where robots are performing a maneuver, $WAIT$ where robots have completed the maneuver and $VOTING$ where robots are voting to start the next instruction. Let $M_i$, initially set to $\emptyset$, be the set of boolean values that indicates when a robot $r_i$ has completed the maneuvers, and let $v_i$ denote the number of rounds in voting of robot $r_i$

For simplicity of presentation, we assume that every robot is aware of the rest of the robots. Later we explain how to remove this assumption. Every robot broadcasts its id, state of the automata and  the number of rounds it has been in the $VOTING$ state. While robots are in the state  $PROGRESS$, they  are performing a maneuver.  When a robot completes it maneuver, it changes its state to $WAIT$ and waits for the other robots to complete their maneuver. When a robot $r_i$ receives the $WAIT$ state from robot $r_k$, it sets a $M_i[k]=true$. Robot $r_i$ changes its state from $WAITING$ to $VOTING$ when $\wedge_{i=1}^n M_i = TRUE$ or  if it receives the $VOTING$ state from a robot $r_k$ in which case it sets $v_i = v_k$. Robot $r_i$ increases $v_i$ and when $v_i = \mathcal{K}$, it changes to $PROGRESS$, sets $M_i = \emptyset$, $v_i=0$ and start a new maneuver.

In the following theorem we show that if the number of consecutive rounds with message lost in the system is less than $\mathcal{K}$, 
all robots start a new maneuver at the same time.  We assume that all the maneuver take a bounded number of  rounds and at least $\mathcal {K}$ rounds.

\begin{theorem}~\label{ref:bestcase}
If the number of consecutive rounds  with message lost in the system is less than $\mathcal{K}$, Algorithm~\ref{alg:actions2} satisfies the following properties: \\
1) Safety: If a robot is in $WAIT$ state, then eventually all robots reach $WAIT$ state. Further, if a robot changes its state to $IN\_PROGRESS$  at round $r$, all other robots change the state to $IN\_PROGRESS$  at round $r$, and\\
2) Liveness:  Each robot executes infinitely often in $IN\_PROGRESS$ state.
\end{theorem}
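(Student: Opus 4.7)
The plan is to prove the two properties in order, using a round-by-round analysis that tracks state propagation under the bounded message-loss assumption of at most $\mathcal{K}-1$ consecutive losses per robot. Throughout, I would treat the broadcast channel as reliable within any window of $\mathcal{K}$ consecutive rounds, so that any state broadcast by a robot for $\mathcal{K}$ rounds in a row is guaranteed to be received by every other robot.

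For the first safety claim, I would fix a robot $r_i$ that enters $WAIT$ at some round $r_0$. Since $r_i$ broadcasts $SendW$ in every subsequent round until transitioning to $VOTING$, and at most $\mathcal{K}-1$ consecutive rounds can experience loss, every other robot $r_j$ receives a $WAIT$ message from $r_i$ within $\mathcal{K}$ rounds and updates $mbrInW$ accordingly. By the assumption that maneuvers complete in bounded time, each $r_j$ also finishes its own maneuver and enters $WAIT$. Once every $r_j$ sees all peers in $WAIT$, either the precondition $mbrInP = mbrInW$ is satisfied, or $r_j$ receives a $VOTING$ broadcast that pulls it into $VOTING$ via the $|mbrInV| > 0$ clause. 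Either way, every robot eventually reaches $WAIT$ and then $VOTING$.

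For the simultaneous-transition half of safety, the crux is to control the drift between the per-robot vote counters. The invariant I would maintain is that after any robot first enters $VOTING$, the counter values $v_i$ of all robots currently in $VOTING$ differ by strictly less than $\mathcal{K}$. The key observation driving this is that when $r_j$ in $WAIT$ learns of a $VOTING$ broadcast from $r_k$, it adopts $v_k$ directly, so any leading counter effectively pulls the laggards forward. Under the loss bound, a leading robot's broadcast cannot fail to arrive for $\mathcal{K}$ consecutive rounds, so a lagging robot must either enter $VOTING$ with the leader's current $v_k$ or catch up by adopting a later broadcast before the leader fires its $PROGRESS$ transition. From this invariant, once any robot's $v_i$ reaches $\mathcal{K}$, I would argue that every other robot has already synchronized its counter to the same value, and therefore all robots fire $PROGRESS$ in the very same round $r$. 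Liveness then follows easily: a robot in $PROGRESS$ completes in bounded time by assumption; a robot in $WAIT$ moves to $VOTING$ within $\mathcal{K}$ rounds by the safety argument; and a robot in $VOTING$ increments $v_i$ each round up to $\mathcal{K}$, so the cycle repeats forever.

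The hardest step will be pinning down the counter-drift invariant precisely. The subtle case is when some robot enters $VOTING$ strictly earlier than another and messages drop in bursts of length close to $\mathcal{K}-1$; one must ensure that the leader never fires $PROGRESS$ before a reconciling $SendV$ broadcast reaches every still-waiting peer. I expect this to reduce to showing that the window of $\mathcal{K}-1$ consecutive losses is matched exactly to the counter threshold $\mathcal{K}$, so that at least one $VOTING$ broadcast with the current counter is delivered to every robot before the leader's counter would hit $\mathcal{K}$. Getting the off-by-one right here is where I anticipate the main care being required.
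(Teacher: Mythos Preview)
Your proposal is correct and follows essentially the same route as the paper. The paper's argument is slightly more direct: rather than maintaining a counter-drift invariant, it simply identifies the \emph{first} robot $r_i$ to enter $VOTING$ (at some round $m_2$ after the last robot has reached $WAIT$), observes that $r_i$ broadcasts $v_i+j$ in each subsequent round $j$, and uses the loss bound to conclude that every other robot $r_k$ receives one of these broadcasts within $[m_2,\,m_2+\mathcal{K}]$ and sets $v_k = v_i+j$ at that moment---so all robots cross the threshold $\mathcal{K}$ in the same round. Your drift invariant (``counters differ by strictly less than $\mathcal{K}$'') is a detour: what you actually need, and what the adoption rule $v_k \gets v_i$ delivers immediately, is that a lagging robot's counter becomes \emph{equal} to the leader's upon reception. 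Tracking only the leading voter, as the paper does, sidesteps the off-by-one concern you anticipate.
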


\begin{proof}
We first prove that property (1) holds under the assumptions. Consider any robot $r_i$ that reaches the $WAIT$ state. Observe that, $r_i$ remains in $WAIT$ until it receives that all other robots are in the $WAIT$ state  since $\wedge_{i=1}^n M_i = FALSE$ or it receives  $VOTING$ from a robot. Therefore, the earliest time that a robot can change its state to $VOTING$ is when all robots are in $WAIT$ for at least one round.  Let $m_1$ be the round where the last robot changes its state to $WAIT$.  Observe that there is a round in the interval $[m_1, m_1 + \mathcal{K}]$ where all robots receive the message with the state $WAIT$, since the number of consecutive rounds  with message lost in the system is less than $\mathcal{K}$. Consider the first robot $r_i$ that listens the $WAIT$ from all other robots in round $m_2 \in [m_1, m_1 + \mathcal{K}]$. Therefore, $r_i$ changes its state to $VOTE$ and broadcasts $v_i + j$ in each round $j \in [m_2 + m2+\mathcal{K}]$.  Observe that $r_i$ changes its state to $PROGRESS$ when $v_i  + j > \mathcal{K}$. When any other robot, say $r_k$, receives the message from  $r_i$ at any round during the interval $[m_2, m_2 + \mathcal{K}]$ it changes its state to $VOTING$ and set $v_k = v_i +j$. Therefore, every other robot changes its state to $PROGRESS$ when $v_i  + j > \mathcal{K}$.

It is not difficult to see that Property (2) holds under the assumptions that the number of consecutive rounds with message loss  is less than $\mathcal{K}$.
\end{proof}

Observe that from Theorem~\ref{ref:bestcase}, robots reach a consensus to start a new maneuver at the same time. Therefore, it provides  a synchronous robot system. Although one can expect that the number of consecutive rounds with message loss is less than $\mathcal{K}$, in practice we can have communication faults, i.e., the number of consecutive rounds with message loss is at least $\mathcal{K}$.  Therefore, properties (1) and (2) may not hold. Indeed, it is known that the consensus problem cannot be solved deterministically if messages can be lost. To alleviate the problem, we consider two operation modes. Namely $COOPERATIVE$ mode where every robot receives the message from every other robot and $AUTONOMOUS$  mode where at least one robot does not receive the message from other robot.  Observe that if there is a partition among the robots such that some robots are in the $AUTONOMOUS$ mode and others are in $COOPERATIVE$ mode, a cooperative robotic application may not be completed successfully or robots may collide. Therefore, the goal is to let robots agree on the operation mode. However, it is again a consensus problem and consequently there is no deterministic algorithm that can solve it. However, we can minimize the number of rounds where robots do not agree using the protocol proposed in~\cite{morales2018stop}.  In this protocol, when a robot does not  receive a message at round $r$ from another robot or when it receives $AUTONOMOUS$ operation mode from any other robot, then it changes its operation mode to $AUTONOMOUS$. The robots return to $COOPERATIVE$ operation mode when they receive the $AUTONOMOUS$ operation mode from every other robot. 

\begin{lemma}
\label{lemma2}
{\bf Stream Consensus~\cite{morales2018stop}}
Robots may disagree in at most two continuous  rounds.
\end{lemma}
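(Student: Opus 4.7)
The plan is to fix a round $r$ in which disagreement first arises after a preceding round of global agreement, and argue by case analysis that disagreement cannot persist beyond round $r+1$. Since the only transitions out of $COOPERATIVE$ are triggered either by a missed message or by receiving $AUTONOMOUS$ from some peer, and no robot was in $AUTONOMOUS$ before round $r$, the first disagreement must have been produced by message loss rather than by propagation. Let $S$ denote the subset of robots that flipped to $AUTONOMOUS$ in round $r$, and let $\bar{S}$ denote the robots that remained in $COOPERATIVE$; this asymmetry is, by construction, the first disagreement round.

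Next I would analyze round $r+1$. Every robot in $S$ now broadcasts $AUTONOMOUS$, so under the post-fault delivery assumption each robot in $\bar{S}$ receives at least one such message and, by the transition rule, switches to $AUTONOMOUS$. Simultaneously I would verify that no robot in $S$ can flip back to $COOPERATIVE$ during round $r+1$: such a transition requires hearing $AUTONOMOUS$ from \emph{every} other robot, but the $\bar{S}$ robots were still in $COOPERATIVE$ during round $r$, so the messages reaching $S$ in round $r+1$ carried $COOPERATIVE$. Hence round $r+1$ is a second, and last, disagreement round, at the end of which every robot sits in $AUTONOMOUS$.

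Finally I would confirm resynchronization. From round $r+2$ onward all robots broadcast $AUTONOMOUS$; once each robot has heard $AUTONOMOUS$ from every other robot, all of them simultaneously satisfy the precondition to return to $COOPERATIVE$, and agreement is restored. Because the first step excludes any disagreement predating round $r$ and the second step forces uniform $AUTONOMOUS$ by the end of round $r+1$, no third consecutive round of disagreement is possible, which is exactly the claim of the lemma.

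The main obstacle I expect is the careful bookkeeping of round boundaries---distinguishing what a robot \emph{observes} during a round from what it \emph{broadcasts} in the same round---together with the corner case in which message loss recurs during the propagation round $r+1$. I would address the latter by noting that any renewed loss simply restarts the same two-round analysis shifted forward, so the bound applies to each maximal window with reliable delivery; the stream-consensus argument in~\cite{morales2018stop} handles this reset explicitly, and I would cite it rather than reprove it.
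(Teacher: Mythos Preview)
The paper does not prove this lemma at all: it is imported verbatim from~\cite{morales2018stop} and used as a black box in the surrounding discussion. So there is no in-paper proof to compare your attempt against; any correct argument you supply would already go beyond what the paper does.

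That said, your sketch has a wrinkle worth fixing. You invoke a ``post-fault delivery assumption'' to force every robot in $\bar{S}$ to receive an $AUTONOMOUS$ message in round $r+1$. No such assumption is needed, and relying on it weakens the result: by the protocol rule, a robot in $\bar{S}$ switches to $AUTONOMOUS$ in round $r+1$ \emph{either} because it hears an $AUTONOMOUS$ message from some member of $S$ \emph{or} because it misses a message from somebody. The two cases are exhaustive, so every $\bar{S}$ robot flips regardless of whether delivery is reliable in round $r+1$. This also dissolves the ``corner case'' you flag at the end (renewed loss during propagation), so you should not need to defer back to~\cite{morales2018stop} for it---doing so is circular, since the lemma \emph{is} the result of that paper. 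Finally, your round bookkeeping is slightly inconsistent: you call $r+1$ a disagreement round but then conclude that all robots agree at its end; be explicit about whether you are counting disagreement in the broadcast values or in the end-of-round states, since the ``at most two'' in the statement depends on that convention.
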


Consider the case where robots are in $WAIT$ or $VOTING$ state. Observe that if the robots change  to $AUTONOMOUS$ operation mode due to message loss, then it is safe to start the next maneuver when they change to $COOPERATIVE$ operation mode. From Lemma~\ref{lemma2}, robots may start within two rounds of difference. However, when robots are in the $PROGRESS$ state, they always send $COOPERATIVE$ as the operation mode and do not execute the \emph{Stream Consensus} protocol. The following theorem summarizes the the previous discussion and is presented without a proof.

\begin{theorem}~\label{theorem2}
Protocol~\ref{alg:actions2} allows the robots to start a new maneuver within at most two rounds of difference.  Further, all robots have common knowledge before starting  a new maneuver.
\end{theorem}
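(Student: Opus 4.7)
The plan is to split into two regimes depending on the amount of message loss and then combine Theorem~\ref{ref:bestcase} with Lemma~\ref{lemma2}. In the benign regime where the number of consecutive rounds with message loss is strictly less than $\mathcal{K}$, Theorem~\ref{ref:bestcase} already yields the much stronger conclusion that all robots transition to $PROGRESS$ at the very same round, so the ``within two rounds'' bound trivially holds; moreover, every WAIT and VOTING message has been received by every robot, so each robot knows the readiness of every peer (and knows that every peer knows, etc.), giving common knowledge.

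In the hostile regime, some window contains at least $\mathcal{K}$ consecutive rounds of message loss, so at least one robot in $WAIT$ or $VOTING$ will detect a missing message and switch its operation mode to $AUTONOMOUS$ via the $GoToAutonomous$ action. By the stream-consensus protocol invoked in Lemma~\ref{lemma2}, the set of robots whose operation mode is $AUTONOMOUS$ and the set whose mode is $COOPERATIVE$ can disagree on at most two consecutive rounds. By the protocol's $GoToCooperative$ precondition, a robot returns to $COOPERATIVE$ only after having received $AUTONOMOUS$ from every other robot, at which point it is safe to begin the next maneuver. The two-round disagreement bound therefore translates directly into a two-round bound on the difference between the earliest and latest robot to start a new maneuver.

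For the common-knowledge part, I would observe that a robot starts a new maneuver only while it is in $COOPERATIVE$ mode. By definition of $COOPERATIVE$ mode the robot has just received the state of every other robot in the current round, so every robot knows the participant set and every participant's current state. Because this condition is symmetric across all robots that start the maneuver together, each knows that the others know, and so on, yielding common knowledge just before the maneuver begins. This is precisely the property that Theorem~\ref{ref:bestcase} establishes in the no-loss case, and the stream-consensus wrapper extends it to the lossy case at the cost of the additional at-most-two-round skew.

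The main obstacle I anticipate is handling the interaction between the $PROGRESS$ phase (during which the text explicitly says Stream Consensus is not executed and $COOPERATIVE$ is always sent) and the $WAIT$/$VOTING$ phase (where the stream-consensus machinery is active). I would need to argue carefully that any disagreement in operation mode is always resolved before the first robot exits $VOTING$, so that the two-round slack from Lemma~\ref{lemma2} is charged against the $VOTING$-to-$PROGRESS$ transition rather than leaking into the next maneuver and compounding. A clean way to do this is to show that the $VOTING$ counter $v_i$ and the operation-mode machinery advance independently, so that once a robot re-enters $COOPERATIVE$ it can count out the remaining $\mathcal{K}$ $VOTING$ rounds and launch the new maneuver, with all other robots doing likewise within at most two rounds by the lemma.
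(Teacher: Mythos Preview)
Your proposal is faithful to the paper's intent, but you should know that the paper explicitly states this theorem \emph{without proof}: it is presented as a summary of the preceding informal discussion (the two-mode $COOPERATIVE$/$AUTONOMOUS$ mechanism, the appeal to Theorem~\ref{ref:bestcase} in the benign regime, and the invocation of Lemma~\ref{lemma2} in the lossy regime). Your two-regime decomposition and your use of Lemma~\ref{lemma2} to bound the skew at the $VOTING$-to-$PROGRESS$ transition are exactly the pieces the paper sketches in prose before stating the theorem.

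Where your write-up goes beyond the paper is in flagging the interaction between the $PROGRESS$ phase (Stream Consensus suspended, $COOPERATIVE$ always broadcast) and the $WAIT$/$VOTING$ phase, and in asking whether disagreement could leak across a maneuver boundary and compound. The paper simply asserts this does not happen; it does not argue it. Your proposed fix---decoupling the $v_i$ counter from the operation-mode machinery---is more detail than the paper supplies, and if you intend to make the argument rigorous you will indeed need something like it, since the paper's discussion alone does not rule out a robot re-entering $COOPERATIVE$ with a stale $v_i$ and launching early. In short: your approach matches the paper's, but the paper never actually discharges the obligation you correctly identify as the main obstacle.
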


Now, we show how to remove the assumption that robots need to know the number of active robots. Indeed, during the $PROGRESS$ state, robots implement a membership service. More precisely, robots maintain the set $members_i$ of participating robots that they receive messages from. Further, robots maintain the set  $membersInWait_i$ of robots that are in the $WAIT$ state. Robots broadcast $membersInWait_k$ while they are in the $WAIT$ state. A robot decides to change its state to $VOTING$ when it receives $membersInWait_k = members_i$ from every other robot.

\section{Virtual Machine}\label{sec:language}
In this section we present the implementation of the programming interface that is used to implement robotic applications in the framework which consists of three callback functions and one method to send  messages; see Interface~\ref{alg:signature2}.

\floatname{algorithm}{Interface}
\begin{algorithm}[ht!]
\begin{algorithmic}[1]
\caption{Programming Interface}
\label{alg:signature2}
\STATE \void *init()
\STATE  \byte (*newRound)(CarCommand *command, LDMap *ldmap, \byte globalState, \inte clock)
\STATE  \byte (*endOfRound)(LDMap *ldmap, \byte globalState, \inte clock)
\STATE  \void sendMessageData(\byte  *payload) 
\end{algorithmic}
\end{algorithm}

\begin{itemize}
\item \emph{init.} It is called  when the system starts. It is used to initialize the local state of the robot. 
\item \emph{newRound.} It is called at the beginning of each application round. In other words, when the robots change their state to $PROGRESS$. 
This function allows  setting the robot's commands.  
Application rounds are suggested to be of 1 or 2 seconds to let the robot make progress.
The function receives the local dynamic map that contains the state of all the participant robots at the end of previous round.
If the operation mode in the local dynamic map is $COOPERATIVE$, then all robots agree in the content of the local dynamic map.
It also receives the globalState and the local clock. 
\item \emph{endOfRound.}  The function receives the local dynamic map that contains the state of all the participant robots in the end of the round, i.e., 
when the robot complete the maneouver. It also receives the globalState, and the local clock. The TIOA is responsible for performing the transitions of globalState 
accordingly.%
\item {sendMessageData.} It allows to broadcast up to 9 bytes in each round. 
\end{itemize}

We enhance  \sysname~ by implementing a set of  local and distributed queries to simplify the design of cooperative robotic applications:
\begin{itemize}
\item \emph{NumberOfMembers.}  Returns the number of active members performing the current primitive.
\item \emph{Leader.} Returns the leader defined as the robot with minimum id. 
\item \emph{ShortestEnclosingCircle.} Returns the center and the radius of the shortest circle that contains all the robots. 
\item \emph{ConvexHull.} Returns the list  of active robots in the convex hull. 
\item \emph{MinWeightedMatching.} Given a set of points, it returns the matching point for each robot that minimizes 
the maximum sum of distances from each point to its destination. 
\end{itemize}

\section{Conclusion}\label{sec:conclusion}

In this paper, we present the current work on  \sysname, a testbed that allows undergraduate students to implement and demo collaborative robotic applications. \sysname~ provides a synchronous robotic system to implement cooperative applications. We expect that the testbed lowers the barriers for the students of computer science and computer engineering at CSULB. 
We plan to perform extensive experiments with different robots. The  implementation has demonstrated that \sysname~is feasible.


\begin{thebibliography}{10}

\bibitem{balta2017integrated}
Haris Balta, Janusz Bedkowski, Shashank Govindaraj, Karol Majek, Pawel
  Musialik, Daniel Serrano, Kostas Alexis, Roland Siegwart, and Geert
  De~Cubber.
\newblock Integrated data management for a fleet of search-and-rescue robots.
\newblock {\em Journal of Field Robotics}, 34(3):539--582, 2017.

\bibitem{chien2012scheduling}
Shih-Yi Chien, Michael Lewis, Siddharth Mehrotra, Nathan Brooks, and Katia
  Sycara.
\newblock Scheduling operator attention for multi-robot control.
\newblock In {\em Intelligent Robots and Systems (IROS), 2012 IEEE/RSJ
  International Conference on}, pages 473--479. IEEE, 2012.

\bibitem{cohen2008local}
Reuven Cohen and David Peleg.
\newblock Local spreading algorithms for autonomous robot systems.
\newblock {\em Theoretical Computer Science}, 399(1-2):71--82, 2008.

\bibitem{davis2016consensus}
Duane~T Davis, Timothy~H Chung, Michael~R Clement, and Michael~A Day.
\newblock Consensus-based data sharing for large-scale aerial swarm
  coordination in lossy communications environments.
\newblock In {\em 2016 IEEE/RSJ International Conference on Intelligent Robots
  and Systems (IROS)}, pages 3801--3808. IEEE, 2016.

\bibitem{doroodgar2014learning}
Barzin Doroodgar, Yugang Liu, and Goldie Nejat.
\newblock A learning-based semi-autonomous controller for robotic exploration
  of unknown disaster scenes while searching for victims.
\newblock {\em IEEE Trans. Cybernetics}, 44(12):2719--2732, 2014.

\bibitem{flocchini2008arbitrary}
Paola Flocchini, Giuseppe Prencipe, Nicola Santoro, and Peter Widmayer.
\newblock Arbitrary pattern formation by asynchronous, anonymous, oblivious
  robots.
\newblock {\em Theoretical Computer Science}, 407(1-3):412--447, 2008.

\bibitem{julier1997new}
Simon~J Julier and Jeffrey~K Uhlmann.
\newblock New extension of the kalman filter to nonlinear systems.
\newblock In {\em Signal processing, sensor fusion, and target recognition VI},
  volume 3068, pages 182--194. International Society for Optics and Photonics,
  1997.

\bibitem{morales2018stop}
Oscar Morales-Ponce, Elad~M Schiller, and Paolo Falcone.
\newblock How to stop disagreeing and start cooperatingin the presence of
  asymmetric packet loss.
\newblock {\em Sensors (Basel, Switzerland)}, 18(4), 2018.

\bibitem{pickem2015gritsbot}
Daniel Pickem, Myron Lee, and Magnus Egerstedt.
\newblock The gritsbot in its natural habitat-a multi-robot testbed.
\newblock In {\em 2015 IEEE International Conference on Robotics and Automation
  (ICRA)}, pages 4062--4067. IEEE, 2015.

\bibitem{pinciroli2016tuple}
Carlo Pinciroli, Adam Lee-Brown, and Giovanni Beltrame.
\newblock A tuple space for data sharing in robot swarms.
\newblock In {\em Proceedings of the 9th EAI International Conference on
  Bio-inspired Information and Communications Technologies (formerly
  BIONETICS)}, pages 287--294. ICST (Institute for Computer Sciences,
  Social-Informatics and~?, 2016.

\bibitem{quigleyros}
Morgan Quigley, Brian Gerkey, Ken Conley, Josh Faust, Tully Foote, Jeremy
  Leibs, Eric Berger, Rob Wheeler, and Andrew Ng.
\newblock Ros: an open-source robot operating system.
\newblock {\em ICRA workshop on open source software}, 2009.

\bibitem{rosenfeld2017intelligent}
Ariel Rosenfeld, Noa Agmon, Oleg Maksimov, and Sarit Kraus.
\newblock Intelligent agent supporting human--multi-robot team collaboration.
\newblock {\em Artificial Intelligence}, 252:211--231, 2017.

\bibitem{rubenstein2012kilobot}
Michael Rubenstein, Christian Ahler, and Radhika Nagpal.
\newblock Kilobot: A low cost scalable robot system for collective behaviors.
\newblock In {\em 2012 IEEE International Conference on Robotics and
  Automation}, pages 3293--3298. IEEE, 2012.

\bibitem{suzuki1993formation}
Ichiro Suzuki.
\newblock Formation and agreement problems for anonymous mobile robots.
\newblock In {\em Proceedings of the 31th Annual Allerton Conference on
  Communication, Control, and Computing,, University of Illinois, Urbana,
  Illinois, October 1993}, 1993.

\bibitem{varadharajan2018over}
Vivek~Shankar Varadharajan, David St~Onge, Christian Gu{\ss}, and Giovanni
  Beltrame.
\newblock Over-the-air updates for robotic swarms.
\newblock {\em IEEE software}, 35(2):44--50, 2018.

\bibitem{wei2015new}
Xu~Wei, Duan Fengyang, Zhang Qingjie, Zhu Bing, and Sun Hongchang.
\newblock A new fast consensus algorithm applied in rendezvous of multi-uav.
\newblock In {\em The 27th Chinese Control and Decision Conference (2015
  CCDC)}, pages 55--60. IEEE, 2015.

\bibitem{yamauchi2017plane}
Yukiko Yamauchi, Taichi Uehara, Shuji Kijima, and Masafumi Yamashita.
\newblock Plane formation by synchronous mobile robots in the three-dimensional
  euclidean space.
\newblock {\em Journal of the ACM (JACM)}, 64(3):16, 2017.

\bibitem{yamauchi2013pattern}
Yukiko Yamauchi and Masafumi Yamashita.
\newblock Pattern formation by mobile robots with limited visibility.
\newblock In {\em International Colloquium on Structural Information and
  Communication Complexity}, pages 201--212. Springer, 2013.

\end{thebibliography}
\end{document}